\newtheorem{axiom}{Axiom}
\newtheorem{myTheorem}{Theorem}
\newtheorem{lemma}{Lemma}
\DeclareSymbolFont{AMSb}{U}{msb}{m}{n}
\DeclareMathSymbol{\N}{\mathbin}{AMSb}{"4E}
\DeclareMathSymbol{\Z}{\mathbin}{AMSb}{"5A}
\DeclareMathSymbol{\R}{\mathbin}{AMSb}{"52}
\DeclareMathSymbol{\Q}{\mathbin}{AMSb}{"51}
\DeclareMathSymbol{\I}{\mathbin}{AMSb}{"49}
\DeclareMathSymbol{\C}{\mathbin}{AMSb}{"43}
\newcommand{\safety}{\mathit{safety}}
\newcommand{\saf}{\mathit{saf}}
\newcommand{\reg}{\mathit{reg}}
\begin{document}
\author{
{Brad Gulko 
\ \ \ \ \ \ \ \  Samantha Leung}\\
Computer Science Department \\
Cornell University\\
Ithaca, NY 14853 \\
bgulko $|$ samlyy@cs.cornell.edu}

\title{Maximin Safety: When Failing to Lose is Preferable to Trying to Win\thanks{We thank Joseph Y. Halpern for useful discussions and anonymous reviewers for useful comments. Work partly supported by NSF grants IIS-0812045 and CCF-1214844,  and ARO grant W911NF-09-1-0281.}}
%
%
%
%

\maketitle              

\begin{abstract}
We present a new decision rule, \emph{maximin safety}, that seeks to maintain a large margin from the worst outcome, in much the same way minimax regret seeks to minimize distance from the best.
We argue that maximin safety is valuable both descriptively and normatively. 
Descriptively, maximin safety explains the well-known \emph{decoy effect}, in which the introduction of a dominated option changes preferences among the other options.
Normatively, we provide an axiomatization that characterizes preferences induced by maximin safety, and show that maximin safety shares much of the same behavioral basis with minimax regret.
\end{abstract} 
 
\section{Introduction}

Representing uncertainty using a probability distribution, and making decisions by maximizing expected utility, is widely accepted, founded on formal mathematical principles, and satisfies intuitive notions of rationality such as independence of irrelevant alternatives and the sure thing principle \cite{Savage54}.
However, enforcing seemingly appealing concepts of rationality can ultimately lead to decisions inconsistent with what real humans consider reasonable. 
For example, observed behavior under unquantified (Knightian \cite{Knight1921}/ strict \cite{LKM10}) uncertainty, such as that in the Ellsberg paradox \cite{Ellsberg1961}, demonstrates how appealing concepts of rationality can lead to inconsistency with human choices.
Alternative decision rules, such as maximin utility \cite{Wald1950} and minimax regret \cite{Savage54,Luce1957} provide rationally plausible decisions in ambiguous situations and can be used to resolve such paradoxes, but still fail to explain some human behavioral patterns. 
A particularly illustrative example of such behavior is called the \emph{decoy effect} \cite{Huber1982}, in which the introduction of a \emph{dominated} option changes the preference among the \emph{undominated} ones. 
While the decoy effect has been investigated in the psychology \cite{Doyle1999,Wedell1991} and economics literature \cite{Bateman2008,Simonson1992}, we are unaware of any axiomatic treatment of it. 
To address this, we introduce a criterion called \emph{safety} as the basis for a \emph{maximin safety} decision rule.
	\footnote{This decision rule has been mentioned in passing, inside a proof by Hayashi \cite{Hayashi2006}, where it was referred to as `maximin joy'.
	We use the term `safety' rather than `joy' to avoid confusion with the concept called `joy of winning'	in \cite{Hayashi2006}.} 
Safety serves as a dual to regret that quantifies distance from a worst outcome, much as regret quantifies proximity to a best outcome.
Maximin safety also satisfies familiar properties common to maximin utility and minimax regret, and hence also resolves the Ellsberg paradox. 
Moreover, maximin safety accommodates observed preferences that are incompatible with minimax regret and maximin utility. 
We demonstrate how safety-seeking behavior can produce the decoy effect, and show how maximin safety can explain it.
We also extend Stoye's \cite{Stoye2011} axiomatizations of standard decision rules to include maximin safety, thus allowing a comparison between maximin safety and state-of-the-art decision rules. 

\subsection{Relative Preferences and Regret}

It is not hard to imagine situations in which performance \emph{relative} to other possible outcomes is more important than \emph{absolute} performance. 
Consider, for example, a group of duck hunters surprised by a hungry bear \cite{Crawley1995,Chao2000}. 
The hunters all attempt to escape by running in the same direction while the slowest one despairs: ``this is hopeless, we can never outrun the bear.''
The hunter in front of him snickers, ``I don't need to outrun the bear, I just need to outrun \emph{you}.''
Whether the prospect is being picked from a group of peers for a date \cite{Ariely2008}, winning a gold medal, or obtaining an `A' in a class, success is often measured by relative performance, rather than by an absolute standard. 
One such preference for relative performance is embodied in the well-known decision theoretical concept of \emph{regret} \cite{Savage54,Luce1957}.
While psychological literature on regret focuses on the bad feelings that occur \emph{after} a choice leads to an inferior outcome, some also considers that anticipation of such negative emotions may influence the choice itself \cite{Simonson1992,Larrick1995,Ritov1996}.

In this paper, we assume that uncertainty is captured by a set of possible worlds, one of which is the true state of the world. 
Regret is a measure of distance between the value of a considered outcome and the value of the best possible outcome, under a given state. 
This leads to an important property that is always true for regret -- the introduction of a \emph{dominated} option does not change the regrets of the existing options. 
We will refer to this property as \emph{independence of dominated alternatives} (IDA). 
Those who believe in regret avoidance may think that this property is perfectly reasonable. 
For example, suppose you have a \$10 bill and you can either buy a \$10 lottery ticket, or two \$5 lottery tickets.
Most would agree that your choice should not be affected by a dominated third option, ``burning the \$10 bill''.
Other standard decision rules, such as expected utility maximization, have even stronger independence guarantees. 
The ranking of two choices under expected utility maximization is \emph{menu-independent}, i.e., completely independent of the set of feasible choices (the \emph{menu}). Menu-independence implies IDA. 
In contrast, regret-based preferences are menu-dependent, but since they conform to IDA, they are not compatible with observed biases sensitive to dominated options \cite{Ariely2008}. 
While IDA seems intuitively appealing, there is a great deal of empirical evidence that human preferences are indeed affected by dominated options in measurable and sometimes profound ways.

\subsection{The Decoy Effect in Decision Theory}
Suppose you are offered \$6 in cash, and the option of trading it for a Cross pen. 
The pen is nice, but you have plenty of pens, so decide to keep the cash. 
Right before you walk away, you are offered an alternative pen in exchange for the \$6.
You see the new pen and find it hideous.
A smile comes to your face as you turn around and say, ``you know, I'll take that original Cross pen after all.''  


This story dramatizes an actual experiment  \cite{Huber1982}. 
When the first choice was offered to $106$ people, $64\%$ took the cash, $36\%$ took the pen. 
When the second pen was added to the offer to $115$ other subjects, $52\%$ took the cash, $46\%$ took the Cross pen, and $2\%$ took the decoy. 
Generally, a \emph{decoy} is an option that is designed to be inferior to another option in every way (i.e., it is a dominated option). 
Despite the intuitive appeal of IDA, the presence of a dominated option drove selection of the Cross pen from $36\%$ to $46\%$. 
In this paper, we focus on a particular class of \emph{decoy effect}, called \emph{asymmetric dominance}, which occurs when the decoy is dominated by one existing alternative, but not by another. 
Empirical studies show that the decoy is rarely chosen, but its addition to a set of choices consistently drives decision makers toward the \emph{dominating choice}.

Numerous empirical studies have also shown decoy effects in class action settlements \cite{Zimmerman2010}, recreational land management \cite{Bateman2008}, choice of healthcare plans and political candidates \cite{Hedgecock2009}, purchase of consumer goods such as cameras and personal computers \cite{Simonson1992}, restaurant choices \cite{Huber1982}, and even romantic attraction \cite{Ariely2008}. 
Surprisingly, a decoy effect can occur even if the decoy is not actually an option, but merely a recent memory of an option (a phantom decoy \cite{Farquhar1993,Doyle1999}). 
Furthermore, the decoy effect is not limited to humans, but is also observed in honeybees and grey jays \cite{Shafir2002}.

In an attempt to explain the decoy effect, experts in the behavioral sciences have offered a variety of domain-specific analyses,
including ``perceptual framing'' \cite{Huber1982}, ``value-shift'' \cite{Wedell1991}, ``extremeness aversion'' \cite{Simonson1992}, and ``contrast bias'' \cite{Simonson1992,Zimmerman2010}.
All of these explanations focus on valuing the discrepancy between the decoy and the dominating alternative. 
Intuitively, this provides a compelling example of preferring the margin of safety from the worst outcome.
As we are not aware of any formalization in decision theory that is consistent with the intuitive preference for ``margin of safety'', we offer one here.


\begin{wraptable}{r}{3cm}
\centering
		\begin{tabular}{|c|c|c|} \hline
		 & \parbox[t][0.8cm]{0.8cm}{\centering{Wet\\Road}} & \parbox[t][0.8cm]{0.8cm}{\centering Dry\\Road} \\ \hline
		 Sprint & 1 & 9 \\ \hline
		Hustle & 3 & 6 \\ \hline
		Jog & 2 & 2 \\ \hline
		\end{tabular}
	\caption{Hunters running from a bear.}
	\label{tab:hunters}
\end{wraptable}

To illustrate our new decision rule, recall the example of the unfortunate duck hunters.
As they run from the bear, they approach a blind curve and have no idea what is around it: it could be wet or dry. 
If it is dry they will cover the most ground if they try to run faster, however if it is wet (thus slippery) they will be better off if they slow down and maintain balance.
The options and the distance traveled under each circumstance are summarized in Table~\ref{tab:hunters}.
In general, exerting excessive effort on a wet road leads to slipping and less distance covered; exerting effort on a dry road leads to more distance covered.


If the probability of the road conditions is unknown, and only the first two options are available ({sprint} and {hustle}), there is no intuitively preferred choice and we may assume there are enough hunters such that at least one will pick each option. 
However, if we add a new option, {jog}, something interesting happens. 
As {jog} is dominated by {hustle}, IDA requires that its availability should not change the preferences among the other options. 
However, regardless of whether the road is wet or dry, hustle is never the worst alternative: if the road is wet, hustle $(3)$ is faster than sprint $(1)$, and if the road is dry, hustle $(6)$ is faster than jog $(2)$. 
In either case, selecting hustle prevents the hunter from being the slowest and getting caught by the bear.

While it may be callous, it seems perfectly reasonable for a hunter to decide to run just fast enough to make sure there is someone behind him. 
In other words, the most sensible decision might be to run just fast enough to guarantee the maximum possible margin between himself and the slowest runner, in the worst scenario. This margin between the hunter and his slowest compatriot can be considered a measure of $\safety$, which is at the heart of our paper.

The rest of the paper proceeds as follows. 
Section~\ref{sec:formal} provides a formalization of the decoy paradox along with basic decision-theoretical notation.  
Section~\ref{sec:safety} describes the relationship between minimax regret and \emph{maximin safety} and shows how maximin safety resolves the decoy paradox. 
Section~\ref{sec:axioms} provide an axiomatic characterization of maximin safety. 
Section ~\ref{sec:conclusion} suggests a unification of utility, regret, and safety using \emph{anchoring functions}, and also considers a generalization to qualitative relative preferences.

\section{The Formal Framework}\label{sec:formal}
Given a set $S$ of \emph{states} and a set $X$ of \emph{outcomes}, an \emph{act} $a$ (over $S$ and $X$) is a function mapping $S$ to $X$. 
The set of all acts is thus $X^S$, which we will denote by $A$.
For simplicity in this paper, we take $S$ to be finite.
Associated with each outcome $x\in X$ is a \emph{utility}: $U(x)$ is the utility of outcome $x$. 
For convenience, we will omit the explicit representation of the outcome, and denote $U(a(s))$ by $U(a,s)$ for each state $s\in S$.
We call a tuple $(S,X,U)$ a (non-probabilistic) decision problem. 
To define regret and safety, we need to assume that we are also given a set $M \subseteq A$ of feasible acts, called the \emph{menu}. 
The reason for the menu is that, as we have shown, regret and safety can depend on the menu. We will only consider finite menus, from which randomized strategies can be chosen.

\begin{wraptable}{r}{7cm}
			\begin{tabular}{|c|c|c|}\hline
			& \parbox[t][0.65cm]{1.3cm}{\centering $s_1$ \\ Safari} & \parbox[t][0.65cm]{1.8cm}{\centering $s_2$ \\World Cup} \\ \hline
			\pbox{3cm}{$a_1$: Travel} & $4$ & $4$ \\ \hline
			\pbox{3cm}{$a_2$: Sports} & $2$ & $6$ \\ \hline
			\pbox{3cm}{$a_3$: Decoy} & $3$ & $3$ \\ \hline
		\end{tabular}
	\caption{\label{tab:camera}Utilities in the camera purchase example.}
\end{wraptable}
Consider the problem of a decision maker (DM) contemplating a camera purchase, summarized in Table~\ref{tab:camera}.
The DM has a choice between buying a rugged travel camera ($a_1$) that takes decent pictures in a wide variety of circumstances, 
and buying a delicate sports camera with higher speed and image quality ($a_2$).
Each state characterizes the possible situations that a purchaser may experience during the useful life of the camera (Will the DM experience harsh conditions? Or win tickets to the World Cup?)
The utility $U(a,s)$ of act $a$ under state $s$ represents an abstract net value to the DM if the true world is state $s$.
 
If the DM ends up going on a safari $(s_1)$, then act $a_1$ results in moderate quality pictures of exciting wildlife $(U(a_1,s_1)=4)$, but act $a_2$ results in a few exquisite shots and many missed opportunities $(U(a_2,s_1)=2)$. 
On the other hand, if the DM goes to the World Cup $(s_2)$, then act $a_2$ results in many great pictures in a safe environment $(U(a_2,s_2)=6)$, while act $a_1$ provides only moderate quality pictures $(U(a_1,s_2)=4)$.

If the DM can assign probabilities $P(s_1)$ and $P(s_2)$ to the states, she can calculate an expected utility $E[U(a_i)] = \sum_{s\in S}P(s)U(a_i,s)$, and simply select the act that maximizes expected utility. 
However, if the state probabilities are unavailable, we have unquantified uncertainty. 
In such cases, the DM must find another method for aggregating the utility of each act across states in order to assign a \emph{value} to each camera. 
Here we will focus on the methods of maximax utility, minimax utility, and minimax regret. 
To understand minimax regret, we need to define the notion of regret. 
For a menu $M$ and act $a\in M$, the regret of $a$ with respect to $M$ and decision problem $(S,X,U)$ is 
\begin{align*}
\max_{s\in S}( \max_{a'\in M}U(a',s) - U(a,s) ).
\end{align*}
We denote this as $\reg_M^{(S,X,U)}(a)$, and usually omit the superscript $(S,X,U)$.

When comparing decision rules, it is often convenient to define a \emph{value function} that assigns a numeric value to each act, for the purpose of ranking the acts.
Formally, for a decision problem $(S,X,U)$, a value function is a function 
$$V^{(S,X,U)}(a,M): X^S \times 2^{A} \rightarrow \R.$$
We will usually omit the superscript $(S,X,U)$ and just write $V(a,M)$, or $V(a)$ if the value function is menu-independent.

We say that the value function $V$ \emph{represents} the family of preference relations $\succ_{V,M}$, if for all menus $M$ and all $a,a'\in M$,
$$a \succ_{V,M} a' \Leftrightarrow V(a,M) > V(a',M).$$
In other words, act $a$ is (strictly) preferred to act $a'$ with respect to menu $M$ if and only if $V(a,M) > V(a',M)$.
The value functions and preferences of several standard decision rules are given in Table~\ref{tab:decisionrules}.

\begin{table*}[tb]
	\centering
		\begin{tabular}{|c|c|p{5.6cm}|c|}  \hline
		Decision Rule & Value of an act $a$ & \centering Decision rule description & Best \\ \hline
		maximax utility & $V(a) = \max_{s\in S} U(a,s)$ & Optimize the best-case outcome. & $a_2$ \\ \hline
		maximin utility & $V(a) = \min_{s\in S} U(a,s)$ & Optimize the worst-case outcome. & $a_1$ \\ \hline
minimax regret & $V(a,M) = -\reg_M(a)$ & Pick an act to minimize the worst-case distance from the best outcome. & $a_1,a_2$ \\ \hline
		\end{tabular}
	\caption{\label{tab:decisionrules}Standard decision rules and most valued acts in the camera example.}
	\vspace{-15pt}
\end{table*}

Now, perhaps the camera vendor would like to sell more travel cameras, so the vender puts an obsolete travel camera $a_3$ next to $a_1$ as a \emph{decoy}.
Camera $a_3$ has the same price as $a_1$, but fewer features and lower picture quality.
The vendor hopes to make $a_1$ more appealing by contrast with $a_3$.
Table~\ref{tab:camera} illustrates the decision problem when $a_3$ is added to the menu.
The ranking between $a_1$ and $a_2$ according to each of the decision rules in Table~\ref{tab:decisionrules} is unaffected by the introduction of $a_3$ to the menu.
The addition of $a_3$ also illustrates the concept of dominance. 
We say that an act $a$ \emph{dominates} $a'$, if for all $s\in S$, $U(a,s) > U(a',s)$. 


\section{Maximin Safety} \label{sec:safety}

While minimax regret seeks to minimize separation from best outcomes, \emph{maximin safety} is a conceptual dual that seeks to maximize separation from the worst outcomes.
For a menu $M$ and act $a\in M$, the safety of $a$ in state $s$ is defined as:
$$ \safety^{(S,X,U)}_M(a,s) = U(a,s) - \min_{a'\in M}( U(a',s) ), $$
and in keeping with the convention for regret, the safety of an act is defined as:
$$
\safety^{(S,X,U)}_M(a) = \min_{s\in S}( \safety^{(S,X,U)}_M(a,s) ).
$$
We will often omit the superscript $(S,X,U)$. 

The family of maximin safety preferences $\succ_{\saf,M}$ represented by the $\safety$ value function satisfies, for all $M$ and $a,a'\in M$,
$$a \succ_{\saf,M} a' \Leftrightarrow \safety_M(a) > \safety_M(a').$$

\begin{table}[htbp]
\centering
	\centering
		\begin{tabular}{|c|c|c|c|c|c|c|} \hline
			& \multicolumn{2}{c}{\parbox[c][0.7cm]{0.95cm}{Utility}} & \multicolumn{2}{|c|}{\parbox[t][1.5cm]{1.55cm}{\centering Safety \\ (no decoy)}} & \multicolumn{2}{|c|}{\parbox[t][0.7cm]{1.55cm}{\centering Safety \\(w. decoy)}} \\ \hline
			& $s_1$ & $s_2$ & $s_1$ & $s_2$ & $s_1$ & $s_2$  \\ \hline
			$a_1$: travel & 4 &4 & 2 & 0 & 2 & 1  \\ \hline
			$a_2$: sports & 2 & 6 & 0 & 2 & 0 & 3  \\ \hline
			$a_3$: decoy & 3 & 3 & \multicolumn{2}{|c|}{} & 1 & 0  \\ \hline
		\end{tabular}
	\caption{\label{tab:table5}Camera purchase {with} and without decoy.}
\end{table}
\begin{table}
	\centering
		\begin{tabular}{|c|c|c|c|c|c|c|c|c|} \hline
			& \multicolumn{2}{|c|}{\parbox[c][0.5cm]{0.9cm}{\centering Utility}} & \multicolumn{2}{|c|}{\parbox[c][0.5cm]{0.9cm}{\centering Regret}} & \multicolumn{2}{|c|}{\parbox[c][0.5cm]{0.9cm}{\centering Safety}} & Optimal for\\ \hline
			& $s_1$ & $s_2$ & $s_1$ & $s_2$  & $s_1$ & $s_2$ & \\ \hline
			$a_1$  & 1 & 9 & 3 & 0  & 0 & 5 &  maximax utility\\ \hline 
			$a_2$  & 3 & 6 & 1 & 3  & 2 & 2 &  maximin safety \\ \hline
			$a_3$  & 2 & 7 & 2 & 2  & 1 & 3 &  minimax regret \\ \hline
			$a_4$  & 4 & 4 & 0 & 5  & 3 & 0 &  maximin utility\\ \hline
		\end{tabular}
	\caption{\label{tab:table6}Different decision rules select different acts for the same problem.}
\end{table}
Now we reconsider the camera example using safety (Table~\ref{tab:table5}).
Without the decoy, both acts have the same safety of 0, since each act has the lowest utility in some state; so there is no clear safety preference. 
However, when the decoy is present, the act $a_1$ never has the lowest utility at any state, and thus it has a strictly positive safety. 
In this case, $\safety_{\{a_1,a_2,a_3\}}(a_1)$ is the unique maximum among the acts $\{a_1,a_2,a_3\}$, and therefore $a_1$ is the preferred choice.
The relative increase in the safety of an act due to the addition of the dominated act is an essential element in solving the decoy paradox. 
Intuitively, this may correspond to a sense that even if a particular act gets low utility in the realized state, the DM may think that ``I'm better off than the fools who bought the worse camera'', or in a more positive light, ``I must be getting a steal with this better camera for the same price''. 
In competitive survival games (such as the reality game show Survivor), the notion of maximizing safety may also embody a preference to maintain a maximal distance from the lowest performer, which reduces the chance of elimination.
Table~\ref{tab:table6} compactly demonstrates how choices based on maximin safety differs from the other standard decision rules.

\begin{wraptable}{r}{8cm}
\centering
		\begin{tabular}{|c|c|c|c|c|c|c|c|c|c|} \hline
			& \multicolumn{3}{|c|}{Utility}  & \multicolumn{3}{|c|}{\parbox[c][0.95cm]{1.6cm}{\centering Safety(no\\decoy)}} & \multicolumn{3}{|c|}{\parbox[c][0.95cm]{1.6cm}{\centering Safety (w. \\ decoy $a4$)} } \\ \hline
			& $s_1$ & $s_2$ & $s_3$  & $s_1$ & $s_2$ & $s_3$ & $s_1$ & $s_2$ & $s_3$ \\ \hline
			$a_1$  & 9 & 2   & 6 & 5 & 0 & 0 	& 8 & 0 & 0 \\ \hline
			$a_2$  & 5 & 3 & 7 & 1 & 1 &1  		& 4 & 1 &1  \\ \hline
			$a_3$  & 4 & 8 & 8 & 0 & 6 & 2 		& 3 & 6 & 2    \\ \hline
			$a_4$  & 1 & 5 & 6 &  \multicolumn{3}{|c|}{}  		& 0 & 3 & 0  \\ \hline
		\end{tabular}
	\caption{\label{tab:table7}Without $a_4$, $M = \{a_1,a_2,a_3\}$, and $a_2 \succ_{\saf,M} a_3$. Adding $a_4$ (dominated by $a_3$) reverses the maximin safety preference between $a_2$ and $a_3$.}
\end{wraptable}
In the camera example, the addition of a decoy created a strict preference between two acts that were initially tied. 
The introduction of a dominated act can actually \emph{reverse} preferences between acts.
Table~\ref{tab:table7} shows a menu of three acts: $M=\{a_1,a_2,a_3\}$.
Act $a_2$ has a minimum safety of $1$, while both $a_1$ and $a_3$ have the lowest utility for some state, so each has minimum safety of $0$.
Consequently, $a_2$ is the most preferred choice under the safety preference.
When a new choice $a_4$ is added, act $a_4$ is dominated by $a_3$, but it has higher utility than the other acts in some states. 
This situation is known as \emph{asymmetric dominance}, which is typically associated with decoy effects. 
In this example, asymmetric dominance guarantees that $a_3$ is never one of the worst choices, and thus has a strictly positive safety value. 
In other words, the addition of $a_4$ to the menu $M$ does not affect the safety of $a_1$ or $a_2$, but increases the safety of $a_3$ to make $a_3 \succ_{\saf,M\cup\{a_4\}} a_2$.

\section{Axiomatic Analysis}\label{sec:axioms}

To provide an axiomatic characterization of maximin safety, we employ the standard \emph{Anscombe-Aumann} (AA) framework \cite{AnscombeAumann1963}, where outcomes are restricted to lotteries. 
Maximin safety is characterized by modifying one of the axioms in an existing characterization of minimax regret provided by Stoye \cite{Stoye2011}.

Given a set $Y$ of \emph{prizes}, a \emph{lottery} over $Y$ is just a probability with finite support on $Y$.  
As in the AA framework, we let the set of \emph{outcomes} be $\Delta(Y)$, the set of all lotteries over $Y$.  
Thus, \emph{acts} are functions from $S$ to $\Delta(Y)$.
We can think of a lottery as modeling objective, quantified uncertainty, while the states model unquantified uncertainty. 
The technical advantage of considering such a set of outcomes is that we can consider convex combinations of acts.  
If $f$ and $g$ are acts, define the act $\alpha f + (1-\alpha)g$ to be the act that maps a state $s$ to the lottery $\alpha f(s) + (1-\alpha)g(s)$.  
For simplicity, we follow Stoye \cite{Stoye2011} and restrict to menus that are the convex hull of a finite number of acts, so that if $f$ and $g$ are acts in $M$, then so is $pf + (1-p)g$ for all $p \in [0,1]$.

In this setting, we assume that there is a utility function $U$ on prizes in $Y$, and that there are at least two prizes $y_1$ and $y_2$ in $Y$, with different utilities.
Note that $l(y)$ is the probability of getting prize $y$ if lottery $l$ is played. 
We will use $l^*$ to denote a constant act that maps all states to $l$.
The utility of a lottery $l$ is just the expected utility of the prizes obtained, that is, $u(l) = \sum_{\{y \in Y \colon l(y)>0\}} l(y) U(y).$
The expected utility of an act $f$ with respect to a probability $\Pr$ is then just $u(f) = \sum_{s \in S} \Pr(s) u(f(s))$, as usual.  
Given a set $Y$ of prizes, a utility $U$ on the prizes, and a state space $S$, we have a family $\succeq^{S,\Delta(Y),u}_{\saf,M}$ of preference orders on acts determined by maximin safety, where $u$ is the utility function on lotteries as determined by $U$.\footnote{
We let $f \succeq^{S,\Delta(Y),u}_{\saf,M} g$ iff $g \not \succ^{S,\Delta(Y),u}_{\saf,M} f$, and $f \sim_M g$ iff $f \succeq_M g$ and $g \succeq_M f$.}
For convenience, from here on we will write $\succeq^{S,Y,U}_M$ rather than $\succeq^{S,\Delta(Y),u}_{\saf,M}$.
We will state the axioms in a way such that they can be compared to standard axioms and those for minimax regret in \cite{Stoye2011}.
The axioms are universally quantified over acts $f$, $g$, and $h$,  menus $M$ and $M'$, and $p \in (0,1)$.  
Whenever we write $f\succeq_M g$ we assume that $f,g\in M$.

\begin{axiom}{(Monotonicity)}\label{axiom:m}
  $f\succeq_M g$ if $(f(s))^* \succeq_{\{(f(s))^*,(g(s))^*\}} (g(s))^*, \forall s\in S$ .
\end{axiom}
\begin{axiom}{(Completeness)} $f\succeq_M g$ or $g\succeq_M f$.
\end{axiom}
\begin{axiom}{(Nontriviality)}\label{axiom:nt}
$f\succ_M g$ for some acts $f$ and $g$ and menu $M$.
\end{axiom}
\begin{axiom}{(Mixture Continuity)}
If $f\succ_M g\succ_M h$, then there exists 
$ q,r\in (0,1)$ such that $q f + (1-q) h \succ_M g \succ_M r f + (1-r) h.$
\end{axiom}
\begin{axiom}{(Transitivity)}\label{axiom:t} $f\succeq_M g\succeq_M h\Rightarrow f\succeq_M h$.
\end{axiom}
Menu-independent versions of Axioms~\ref{axiom:m} to \ref{axiom:t} are standard in other axiomatizations, and in particular hold for maximin utility.
Axiom \ref{axiom:nt} is used in the standard axiomatizations to get a nonconstant utility function in the representation.
While maximin safety does not satisfy menu-independence, it does satisfy menu-independence when restricted to menus consisting of only constant acts.
This property is captured by the following axiom.

\begin{axiom}{(Menu independence for constant acts)} If $l^*$ and $(l')^*$ are
constant acts, then $l^* \succeq_M (l')^*$ iff $l^* \succeq_{M'} (l')^*$.
\end{axiom}

We also have a menu-dependent version of the von Neumann-Morgenstern (VNM) Independence axiom.
Like the VNM Independence axiom, Axiom~\ref{axiom:IND} says that ranking between two acts does not change when both acts are mixed with a third act; but unlike VNM Independence, the menu used to compare the original acts in Axiom~\ref{axiom:IND} is different from that used to compare the mixtures.
Axiom~\ref{axiom:IND} holds for minimax regret and maximin safety, but not for maximin utility.

\begin{axiom}{(Independence)}\label{axiom:IND}
$f\succeq_M g \Leftrightarrow pf + (1-p) h \succeq_{pM + (1-p)h} pg + (1-p) h$.
\end{axiom}

\begin{axiom}{(Symmetry)}
For a menu $M$, suppose that $E,F\in 2^S \backslash\{\emptyset\}$ are disjoint events such that 
for all $f\in M$, $f$ is constant on $E$ and on $F$. Define $f'$ by 
$$
\begin{array}{ll}
f'(s) = \begin{cases}
f(s')\text{ for some }{s'\in E}, \text{ if } s\in F \\
f(s')\text{ for some }{s'\in F}, \text{ if } s\in E \\
f(s) \text{ otherwise }
\end{cases}
\end{array}$$
Let $M'$ be the menu generated by replacing every act $f\in M$ with $f'$. 
Then 
$$
\begin{array}{ll}
f\succeq_M g \Leftrightarrow f'\succeq_{M'} g'
\end{array}$$
\end{axiom}
Symmetry, which is one of the characterizing axioms for minimax regret in \cite{Stoye2011}, captures the intuition that no state can be considered more or less likely than another. 
Therefore Symmetry helps distinguish the probability-free decision rules maximin utility, minimax regret, and maximin safety, from their probabilistic counterparts \cite{GilboaSchmeidler1989,StoyeRegret}.
\begin{axiom}{(Ambiguity Aversion)}\label{axiom:AA}
$f\sim_M g \Rightarrow pf + (1-p)g \succeq_M g.$
\end{axiom}
 Axiom \ref{axiom:AA} says that the decision maker weakly prefers to hedge her bets. 
Axioms 1-\ref{axiom:AA} are all part of the characterization in \cite{Stoye2011} of minimax regret (which consists of Axioms 1-\ref{axiom:AA} and Symmetry).
Axioms 1-5 and \ref{axiom:AA} are also sound for the maximin decision rule \cite{Stoye2011}. 

In \cite{Stoye2011}, one of the axioms characterizing minimax regret is Independence of Never Strictly Optimal alternatives (INA), which states that adding or removing acts that are not strictly potentially {optimal} in the menu does not affect the ordering of acts. \footnote{
An act $h$ is \emph{never strictly optimal  relative to $M$} if, for all states $s \in S$, there is some $f \in M$ such that $(f(s))^* \succeq (h(s))^*$. }
By varying this INA axiom, we obtain a characterization for maximin safety. 
We say that an act $a$ is \emph{never strictly worst relative to $M$} if, for all states $s\in S$, there is some $a'\in M $ such that $a(s) \succeq a'(s)$.
\begin{axiom}{(Independence of Never Strictly Worst Alternatives (INWA))}\label{axiom:INWA}
If an act $a$ is never strictly worst relative to $M$, then $f \succeq_M g$ \emph{iff} $f \succeq_{M\cup\{a\}} g$.
\end{axiom}

Although adding acts to the menu, in general, can affect minimax regret preferences, INA implies the Independence of Dominated Alternatives property that we used earlier when discussing the decoy effect.
Thus, INA guarantees that minimax regret can never be compatible with the decoy effect.

\begin{myTheorem}\label{thm:safety}
For all $Y,U,S,$ the family of maximin safety preference orders $\succeq^{S,Y,U}_{\saf,M}$ induced by a decision problem $(S,\Delta(Y),u)$ 
satisfies Axioms \ref{axiom:m}--\ref{axiom:INWA}.  
Conversely, if the family of preference orders $\succeq_{M}$ on the acts in
$\Delta(Y)^S$ satisfies Axioms \ref{axiom:m}--\ref{axiom:INWA}, then there exists a  
utility function $U$ on $Y$ that determines a utility $u$ on $\Delta(Y)$ such that
$\succeq_M = \succeq_{\saf,M}^{S,Y,u}$.
Moreover, $U$ is unique up to affine transformations.
\end{myTheorem}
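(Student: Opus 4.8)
The plan is to prove both directions by following the blueprint of Stoye's characterization of minimax regret, treating maximin safety as the exact dual obtained by replacing the upper envelope $\max_{a'\in M}u(a',s)$ with the lower envelope $\min_{a'\in M}u(a',s)$; correspondingly, the only axiom that must be re-examined from scratch is INWA (in place of INA), while everything else is either shared verbatim with the regret characterization or follows by a short direct computation from the closed form $\safety_M(a)=\min_{s\in S}\bigl(u(a,s)-\min_{a'\in M}u(a',s)\bigr)$.

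For soundness I would check the axioms one at a time. Monotonicity, Completeness, Nontriviality, Mixture Continuity, Transitivity, Menu independence for constant acts, Independence, and Symmetry hold for exactly the same reasons as in Stoye's regret characterization, since their verification uses only that $u$ is affine in lotteries and that the state-wise benchmark is a fixed function of the menu; for instance, Symmetry holds because swapping two events on which all acts are constant merely permutes the state-wise safety values, leaving their minimum unchanged. Ambiguity Aversion is immediate from affinity of $u$: writing $b(s)=\min_{a'\in M}u(a',s)$, one has $\safety_M(pf+(1-p)g,s)=p\,\safety_M(f,s)+(1-p)\safety_M(g,s)$, and concavity of $\min_{s}$ then gives $\safety_M(pf+(1-p)g)\ge p\,\safety_M(f)+(1-p)\safety_M(g)=\safety_M(g)$ whenever $f\sim_M g$. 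The genuinely new check is INWA: if $a$ is never strictly worst relative to $M$ then $u(a,s)\ge\min_{a'\in M}u(a',s)$ for every $s$, so $\min_{a'\in M\cup\{a\}}u(a',s)=\min_{a'\in M}u(a',s)$ for all $s$; hence every act already in $M$ has unchanged state-wise safety, giving $f\succeq_M g\iff f\succeq_{M\cup\{a\}}g$.

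For the representation (converse) direction I would proceed in three steps. First, restricting to menus of constant acts and invoking Menu independence for constant acts together with the vNM-type axioms and Nontriviality, apply the mixture-space (von Neumann--Morgenstern) theorem to extract a utility $u$ on $\Delta(Y)$ that is affine in lotteries, induced by some $U$ on $Y$ and unique up to positive affine transformation; this step is identical to the corresponding one in Stoye. Second, fixing an arbitrary menu $M$ and using only the axioms from Monotonicity through Ambiguity Aversion together with Symmetry --- that is, precisely the axioms shared with the maximin and minimax-regret characterizations, none of which is INWA --- derive that $\succeq_M$ is represented by $V(a,M)=\min_{s\in S}\bigl(u(a,s)-c_M(s)\bigr)$ for some benchmark $c_M:S\to\R$ determined up to an additive constant (Ambiguity Aversion and Independence force the concave min-aggregation, Symmetry forces equal treatment of states). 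Third, invoke INWA to identify the benchmark: this is the only place INWA is used, and the argument is dual to Stoye's INA argument. Since adding a never-strictly-worst act does not change $\succeq_M$, the benchmark $c_M$ can depend only on the lower envelope $s\mapsto\min_{a'\in M}u(a',s)$; reducing any menu by INWA to a canonical menu realizing that lower envelope and calibrating against the constant acts from the first step pins down $c_M(s)=\min_{a'\in M}u(a',s)$ up to the immaterial additive constant. Substituting this benchmark into the form from the second step yields $V(a,M)=\safety_M(a)$, so $\succeq_M=\succeq_{\saf,M}^{S,Y,U}$, and uniqueness of $U$ up to positive affine transformation is inherited directly from the vNM uniqueness of the first step.

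The main obstacle I expect is the third step: leveraging INWA to force the benchmark to equal the pointwise minimum in every state, rather than merely to lie weakly below the lower envelope. Because the benchmark is only pinned down up to a per-menu additive constant, the real claim to establish is that the state-to-state differences $c_M(s)-c_M(s')$ coincide with those of the lower envelope $\min_{a'\in M}u(a',\cdot)$; carrying this out requires carefully dualizing Stoye's INA reduction (which there yields the upper envelope) and verifying that the one-directional ``add an act above the lower envelope'' invariance supplied by INWA, combined with the constant-act calibration, is strong enough to determine these differences uniquely.
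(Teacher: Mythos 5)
Your soundness direction is fine and matches what the paper treats as routine; in particular your INWA check (a never-strictly-worst act leaves the lower envelope $s\mapsto\min_{a'\in M}u(a',s)$, hence every state-wise safety, unchanged) is exactly the right observation. The gap is in Step 2 of your converse. You assert that Axioms 1--9 alone (everything except INWA) already force each $\succeq_M$ to have the form $V(a,M)=\min_{s\in S}\bigl(u(a,s)-c_M(s)\bigr)$ for some benchmark $c_M$, with only the identification of $c_M$ left for INWA. That intermediate representation is not a citable result, and your parenthetical justification does not go through: Symmetry, as stated, only constrains menus \emph{all} of whose acts are constant on a pair of disjoint events, so it has essentially no bite on a generic menu, and menu-dependent Independence only relates $M$ to menus in its mixture orbit $pM+(1-p)h$. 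Within a single such orbit, the per-menu axioms (monotonicity, completeness, transitivity, continuity, ambiguity aversion) are also satisfied by, e.g., anchored MEU-type rules $V(a,M)=\min_{P\in C}\sum_s P(s)\bigl(u(a,s)-c_M(s)\bigr)$ with $C$ a symmetric proper subset of the simplex, which are not of min-benchmark form; Symmetry only rules these out on structured menus, and without INWA you cannot transport that restriction to arbitrary menus. So the ``concave min-aggregation'' cannot be derived menu-by-menu from the shared axioms --- INWA is needed structurally, not merely to calibrate the benchmark at the end. Your own flagged worry about Step 3 (pinning down the differences $c_M(s)-c_M(s')$) is real too, but it is downstream of this deeper problem.

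For contrast, the paper (following Stoye) uses INWA from the outset rather than at the end. After obtaining $U$ from the constant acts (your Step 1, which matches), it reduces everything to the single grand menu $M^+$ of all acts with nonnegative utilities: because $M^+$ has lower envelope identically zero, INWA licenses adding or removing any nonnegative-utility act without changing comparisons, which makes the preference on zero-lower-envelope menus menu-independent and lets Symmetry and the other axioms be applied there in their full, menu-independent strength. Stoye's Theorem 1(iii) then yields that $\succeq_{M^+}$ is maximin utility. Finally, an arbitrary menu comparison is translated into $M^+$ by mixing with the act $g_M$ satisfying $u\circ g_M(s)=-\min_{h\in M}u(h,s)$: Independence gives $f\succeq_M g$ iff $\frac{1}{2}f+\frac{1}{2}g_M\succeq_{M^+}\frac{1}{2}g+\frac{1}{2}g_M$, and unwinding the maximin evaluation of the mixtures produces exactly $\min_{s\in S}\bigl(u(f,s)-\min_{h\in M}u(h,s)\bigr)\ge\min_{s\in S}\bigl(u(g,s)-\min_{h\in M}u(h,s)\bigr)$, i.e.\ the safety representation, with uniqueness of $U$ inherited from the vNM step. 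If you repair your Step 2 by proving it via this $M^+$ reduction, your argument collapses into the paper's; as written, Step 2 is an unproven claim that is plausibly false without INWA.
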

\begin{proof}
The soundness of the axioms are straightforwardly verified, so we show only the completeness of the axioms.
We will use the same general sequence of arguments that Stoye uses in \cite{Stoye2011}.
First, we establish a nonconstant utility function $U$, where constant acts are ranked by their expected utilities. 
Since we have the standard axioms ($1-5$), we get $U$ from standard arguments, and it is unique up to affine transformations. 
Next, we observe the following lemma:
\begin{lemma}\label{lem:one}Suppose the family $\succeq_M$ satisfies Axioms 1-10, and $\succeq_{M^+}$ is representable by maximin safety, where $M^+$ is the menu of all acts with nonnegative utilities. Then the family $\succeq_M$ is representable by maximin safety.
\end{lemma}
Lemma~\ref{lem:one} follows from an argument analogous to that for regret in \cite{Stoye2011}.
The next step is to establish that the axioms on $\succeq_M$ restrict $\succeq_{M^+}$ to satisfy the axioms of ambiguity aversion, monotonicity, completeness, transitivity, non-triviality, and symmetry. It is a straightforward verification that will not be reproduced here. 
Theorem 1 (iii) of \cite{Stoye2011} then implies that $\succeq_{M^+}$ is the maximin utility ordering. 
Next, let $g_M$ be an act such that $u\circ g_M(s) = - \min_{h\in M}u(h,s)$, so that we have
\newline
$$
\begin{array}{ll}
f \succeq_M g  & \Leftrightarrow  \frac{1}{2}f + \frac{1}{2} g_M \succeq_{M^+} \frac{1}{2}g + \frac{1}{2}g_M \\
 & \Leftrightarrow  \min_{s\in S}u(\frac{1}{2}f + \frac{1}{2} g_M ,s) \geq \min_{s\in S}u(\frac{1}{2}g + \frac{1}{2}g_M,s)\\
 & \Leftrightarrow  \displaystyle{\min_{s\in S}}( \frac{1}{2}( u(f,s)- \displaystyle{\min_{h\in M}}u(h,s)) ) \geq 
			\displaystyle{\min_{s\in S}}( \frac{1}{2}( u(g,s)-\displaystyle{\min_{h\in M}}u(h,s)) ).
\end{array}
$$
\qed
\end{proof}

\vspace{-9pt}
The characterizing axioms serve as a justification for maximin safety in the sense that behaving as a safety maximizer is equivalent to accepting the axioms. 
Axioms 1-7 are standard and broadly accepted to be reasonable, while symmetry and ambiguity aversion are implied by both maximin utility and minimax regret.
Whether the INA axiom (for regret) or the INWA axiom (for safety) is more reasonable would depend on the individual and the nature of the decision problem.
Thus, we believe that the reasonableness of the maximin safety decision rule is comparable to that of minimax regret.

Individual necessity of the axioms can be established, as is commonly done \cite{Hayashi2006,Stoye2011}, by giving examples of preferences that satisfy all the axioms except for the one whose necessity is being shown. 
For the axioms shared with minimax regret, the same examples found in \cite{Stoye2011} shows their individual necessity. 
For the INWA axiom, the required example is minimax regret. 
Indeed, a decision rule equivalent to maximin safety was used by Hayashi \cite{Hayashi2006} as an example to justify minimax regret's entailment of INA.

Clearly, just as minimax regret is readily generalized to \emph{minimax expected regret} when uncertainty is represented by a set of probability distributions over the state space, maximin safety can be readily extended to \emph{maximin expected safety} in the same manner. 
As one would expect, given an axiomatization of minimax expected regret \cite{StoyeRegret}, the modification of the INA axiom to INWA results in an axiomatization for maximin \emph{expected} safety. 

\section{Discussion, Generalizations, and Future Work}\label{sec:conclusion}

Both minimax regret and maximin safety embody preferences based on \emph{relative}, rather than \emph{absolute} utility. 
In Table~\ref{tab:table6}, the act preferred by safety has a lower minimum utility than the act preferred by maximin utility, just as the act picked  by minimax regret neglects a higher maximum utility in order to minimize the \emph{margin} to each state's maximum utility.
The shared preference for relative over absolute performance is reflected in a striking similarity in the structure of the value functions for regret and safety. In comparison, minimax regret can be expressed for all acts $a,b$ as: 
$$
\begin{array}{ll}
a \succ_{\reg,M} b \text{ iff } &\displaystyle{\min_{s\in S}}(U(a,s) - \displaystyle{\max_{a'\in M}}U(a',s)) > \displaystyle{\min_{s\in S}}(U(b,s) - \displaystyle{\max_{a'\in M}}U(a',s)).
\end{array}
$$
Similarly, maximin safety is represented for all acts $a,b$ as 
$$
\begin{array}{ll}
a \succ_{\saf,M} b \text{ iff } &\displaystyle{\min_{s\in S}}(U(a,s) - \displaystyle{\min_{a'\in M}}U(a',s)) > \displaystyle{\min_{s\in S}}(U(b,s) - \displaystyle{\min_{a'\in M}}U(a',s)).
\end{array}
$$
The structural resemblance suggests a common form for the value function. By defining a menu-dependent \emph{anchoring function} $t: S \times 2^A \rightarrow \R$, we can represent several previously discussed value functions as:
$$
\begin{array}{ll}
V_t(a,M) = \min_{s\in S} U'(a,s,M,t),
\end{array}
$$
where $U'(a,s,M,t) = U(a,s) - t(s,M)$ can be viewed as an \emph{anchored effective utility}. 
One can see that $V_t$ represents maximin utility if $t(s,M)=0$; minimax regret if $t(s,M)=\max_{a'\in M}U(a',s)$; and maximin safety if $t(s,M) = \min_{a'\in M}U(a',s)$.
Note that by varying just the anchoring function $t$, we can obtain all the mentioned decision rules, and more.
While we focus only on maximin safety in this paper, other forms for $t(s,M)$ maximize the positive margin from a state-dependent average, median, or some other characteristic of interest to a DM. 
For example, college students might seek to conservatively maximize their margin above a desired quantile, in order to achieve a particular grade.


The present work is motivated by behavioral observations of the decoy effect that are typically described in empirical quantities such as distance, price and volume, and thus is most intuitive in a \emph{quantitative} framework. 
However, the key observation is that safety, like regret, is a notion of relative performance with respect to a set of outcomes, rather than absolute performance. 
As absolute quantitative utility $U:X \to \R$ can be generalized to a qualitative framework by replacing the $U$ with a mapping $X \to L$ for some ordered set $L$, relative utility may be made qualitative by considering the mapping with $2^X \times X \to L$. 
In the case of safety and regret, the particular element of $2^X$ is the set of all possible outcomes in a state, given a menu of acts. 
Aggregation of $N$ state-specific orderings into an ordering over acts can be accomplished by an aggregation function $M:L^N \to L$ \cite{Marichal2001}. 
This generalization can be readily applied to various characterizations of uncertainty, including probability, plausibility, and the strict uncertainty used in this paper \cite{LKM10}. 
While the authors expect that the present quantitative axiomatization can be adapted to a qualitative framework (see, e.g. \cite{DFP03}), it is beyond the scope of the current paper. 
 
\bibliographystyle{abbrv}
\bibliography{awareness,safety}

\end{document}